\documentclass{article}
\usepackage{graphicx} 
\usepackage{amsfonts, amsmath, amssymb, amsthm}
\usepackage{mathtools}
\usepackage{enumitem}
\usepackage{hyperref}
\usepackage{rotating}
\usepackage{tikz}

\usetikzlibrary{arrows.meta,positioning}

\newtheorem{proposition}{Proposition}
\newtheorem{theorem}{Theorem}

\title{Thermodynamic Response Functions in Singular Bayesian Models}
\author{Sean Plummer \\ snplmmr@gmail.com}
\date{\today}

\begin{document}
\maketitle

\begin{abstract}

Singular statistical models—including mixtures, matrix factorization, and neural networks—violate regular asymptotics due to parameter non-identifiability and degenerate Fisher geometry. Although singular learning theory characterizes marginal likelihood behavior through invariants such as the real log canonical threshold and singular fluctuation, these quantities remain difficult to interpret operationally. At the same time, widely used criteria such as WAIC and WBIC appear disconnected from underlying singular geometry. We show that posterior tempering induces a one-parameter deformation of the posterior distribution whose associated observables generate a hierarchy of thermodynamic response functions. A universal covariance identity links derivatives of tempered expectations to posterior fluctuations, placing WAIC, WBIC, and singular fluctuation within a unified response framework. Within this framework, classical quantities from singular learning theory acquire natural thermodynamic interpretations: RLCT governs the leading free-energy slope, singular fluctuation corresponds to curvature of the tempered free energy, and WAIC measures predictive fluctuation. We formalize an observable algebra that quotients out non-identifiable directions, allowing structurally meaningful order parameters to be constructed in singular models. Across canonical singular examples—including symmetric Gaussian mixtures, reduced-rank regression, and overparameterized neural networks—we empirically demonstrate phase-transition-like behavior under tempering. Order parameters collapse, susceptibilities peak, and complexity measures align with structural reorganization in posterior geometry. Our results suggest that thermodynamic response theory provides a natural organizing framework for interpreting complexity, predictive variability, and structural reorganization in singular Bayesian learning.

\end{abstract}

\section{Introduction}

\subsection{Motivation}
Singular statistical models arise whenever multiple parameter values induce the same predictive distribution, as in mixture models, low-rank factorizations, and neural networks with symmetry or overparameterization \cite{watanabe2009algebraic, watanabe2018mathematics}. In such settings, the Fisher information may be degenerate and classical regular asymptotics fail: posterior mass concentrates on sets with nontrivial geometry, and standard ``effective dimension'' heuristics become unreliable.

Singular learning theory provides a principled asymptotic description of Bayesian marginal likelihood and predictive performance in these regimes through invariants such as the real log canonical threshold (RLCT) and the singular fluctuation. These quantities explain, for example, why the marginal likelihood can scale with sample size according to non-integer ``learning coefficients'' and why predictive complexity can differ from regular-model expectations. However, despite their theoretical importance, RLCT and related constants can be difficult to interpret operationally in finite samples and are rarely used as practical diagnostics.

At the same time, widely-used criteria such as the widely applicable information criterion (WAIC) and the widely applicable Bayesian information criterion (WBIC) are often applied in singular settings, yet their relationship to the underlying singular geometry can appear opaque \cite{watanabe2010asymptotic, watanabe2013widely}. This motivates the search for a structural and interpretable framework that (i) applies in singular models without requiring model-specific asymptotic derivations, and (ii) organizes practical complexity measures in a way that exposes what aspects of posterior geometry they are responding to. 

The central claim of this paper is that several widely used quantities in Bayesian model evaluation—including WAIC complexity and singular fluctuation—can be understood as thermodynamic response functions generated by posterior tempering. This work forms part of a broader effort to understand singular Bayesian learning through thermodynamic structure.

\subsection{Core idea: tempering as a deformation}
Our starting point is posterior tempering, which defines a one-parameter deformation of the posterior distribution:
\[
\pi_\beta(\theta \mid D) \;\propto\; \pi(\theta)\, p(D \mid \theta)^\beta,
\qquad \beta > 0.
\]
Varying $\beta$ interpolates between the prior ($\beta \to 0$) and the ordinary posterior ($\beta = 1$), producing a controlled family of distributions that reweights the same likelihood landscape \cite{neal2001annealed}. Tempering provides a particularly convenient deformation because it preserves the likelihood landscape while continuously reweighting its influence relative to the prior. This makes it possible to probe posterior structure without altering the underlying statistical model.

 We treat measurable functions $f : \Theta \to \mathbb{R}$ that are integrable under $\pi_\beta$ as \emph{observables}. By this we simply mean quantities whose posterior expectations are well-defined under tempering:
\[
\mathbb{E}_\beta[f] = \int f(\theta)\,\pi_\beta(\theta \mid D)\, d\theta.
\]
In physics terminology, observables are measurable quantities of a system; here they are posterior-measurable functions of model parameters. Their expectations summarize aspects of posterior geometry, and their fluctuations encode structural variability. A basic identity shows that the derivative of a tempered expectation is given by a covariance with the log-likelihood,
\[
\frac{d}{d\beta}\, \mathbb{E}_\beta[f]
\;=\;
\mathrm{Cov}_\beta\!\left(f, \ell\right),
\qquad
\ell(\theta) := \log p(D \mid \theta),
\]
so that sensitivities of observables with respect to $\beta$ are controlled by posterior fluctuations. This suggests a thermodynamic viewpoint: $\beta$ plays the role of an inverse temperature, $\ell$ plays the role of (negative) energy, and the resulting covariances and variances define \emph{response functions}. This identity implies that fluctuations of the log-likelihood control how
posterior expectations respond to changes in temperature, placing familiar Bayesian quantities within a thermodynamic response hierarchy.

\subsection{Contributions and paper organization}
This paper develops a response-function framework for singular Bayesian learning with three components.

\paragraph{Observable algebra.}
We formalize an ``observable algebra'' by quotienting out functions that are constant along non-identifiable directions (i.e., directions that do not change the induced predictive distribution). This separates parameterization artifacts from statistically meaningful quantities and guides the construction of order parameters in singular models.

\paragraph{Universal response identities.}
We use the covariance identity to define order parameters, susceptibilities, and complexity responses under tempering. Within this framework, WAIC and related quantities appear as fluctuation-based responses at $\beta=1$, while the curvature of a free-energy-like object provides a natural interpretation of singular fluctuation.

\paragraph{Empirical confirmation in canonical singular models.}
We empirically demonstrate that response functions localize structural reorganization in posterior geometry across three canonical settings: a symmetric Gaussian mixture (symmetry breaking), reduced-rank regression (algebraic singularity), and an overparameterized neural network (modern non-identifiability). Across these examples, order parameters collapse, susceptibilities exhibit sharp peaks, and complexity responses align with the same transition-like behavior.

\paragraph{Toward a thermodynamic framework for singular learning.}
The response perspective developed here is part of a broader effort to
understand singular statistical models through thermodynamic structure.
Posterior tempering introduces a natural control parameter whose associated
fluctuations reveal geometric and predictive properties of the model.
In this paper we focus on the response functions generated by this deformation
and their relationship to practical complexity measures such as WAIC and WBIC.
More generally, these ideas suggest that tools from statistical physics—
including response theory and renormalization concepts—may provide a useful
organizational language for singular Bayesian learning.

\paragraph{Organization.}
Section~\ref{sec:background} reviews tempering and minimal singular-learning-theory background. Section~\ref{sec:obs-algebra} introduces the observable algebra and examples. Section~\ref{sec:responses} develops response functions and places WAIC/WBIC within this hierarchy. Section~\ref{sec:nu} interprets singular fluctuation as a curvature response. Section~\ref{sec:experiments} presents experiments, and Section~\ref{sec:discussion} discusses implications and limitations.

\section{Background}
\label{sec:background}

\subsection{Singular models and posterior tempering}

Let $\Theta \subset \mathbb{R}^d$ denote a parameter space and let
\[
p(x \mid \theta), \qquad \theta \in \Theta,
\]
be a parametric statistical model. The model is said to be \emph{singular} if the map
\[
\theta \mapsto p(\cdot \mid \theta)
\]
is not locally injective and the Fisher information matrix may be degenerate at parameter values that realize the true distribution. Equivalently, distinct parameter values can induce the same predictive distribution, and the likelihood surface may exhibit flat directions or self-intersections. Mixture models, low-rank matrix factorizations, and neural networks with permutation symmetries are canonical examples.

Given data $D = \{x_i\}_{i=1}^n$ and prior $\pi(\theta)$, the tempered posterior is defined for $\beta > 0$ as
\[
\pi_\beta(\theta \mid D)
\;\propto\;
\pi(\theta)\, p(D \mid \theta)^\beta,
\qquad
p(D \mid \theta) = \prod_{i=1}^n p(x_i \mid \theta).
\]
The ordinary posterior corresponds to $\beta = 1$, while $\beta \to 0$ recovers the prior. Varying $\beta$ produces a one-parameter deformation that continuously reweights the likelihood contribution relative to the prior.

Define the partition function
\[
Z(\beta)
=
\int \pi(\theta)\, p(D \mid \theta)^\beta \, d\theta,
\]
and the associated free energy
\[
F(\beta)
=
- \frac{1}{\beta} \log Z(\beta).
\]
In regular models, derivatives of $\log Z(\beta)$ admit classical asymptotic expansions. In singular models, however, the geometry of the parameter-to-distribution map fundamentally alters the scaling behavior of $Z(\beta)$ with respect to sample size.

\subsection{Minimal results from singular learning theory}

Let $q(x)$ denote the true data-generating distribution. Define the
expected Kullback--Leibler divergence
\[
K(\theta)
=
\int q(x)\, \log \frac{q(x)}{p(x \mid \theta)}\, dx.
\]
Let
\[
\theta_\star \in \arg\min_{\theta \in \Theta} K(\theta)
\]
denote a minimizer of the KL divergence. In the realizable case,
$K(\theta_\star)=0$ and the set of minimizers may form a singular
variety rather than an isolated point.

Let the empirical negative log-likelihood be
\[
L_n(\theta)
=
- \frac{1}{n} \sum_{i=1}^n \log p(x_i \mid \theta).
\]

Under suitable regularity conditions, singular learning theory shows that
the marginal likelihood admits the asymptotic expansion
\[
\log Z(1)
=
- n L_n(\theta_\star)
+ \lambda \log n
- (m-1)\log \log n
+ O_p(1),
\]
where $\lambda$ is the \emph{real log canonical threshold} (RLCT) and
$m$ is its multiplicity \cite{watanabe2009algebraic}. In regular models, $\lambda = d/2$, while in
singular models $\lambda$ can be fractional and reflects the local
algebraic structure of $K(\theta)$ near its minimum set.

Another key quantity is the \emph{singular fluctuation} $\nu$, which
governs the leading-order fluctuation of the log-likelihood under the
posterior \cite{watanabe2010asymptotic}. It appears in predictive risk expansions and determines the
effective complexity of the model. In regular models, $\nu = d/2$, but
in singular models it generally differs from $\lambda$.

The widely applicable Bayesian information criterion (WBIC) evaluates
the tempered expectation of the log-likelihood at inverse temperature $\beta_n = 1/\log n$, providing an asymptotic approximation to the marginal likelihood in both regular and singular settings.

In what follows, we do not rely on detailed asymptotic expansions.
Instead, these quantities serve as structural anchors: the RLCT
describes leading-order free-energy scaling, singular fluctuation
controls curvature-like behavior of the tempered free energy, and WBIC selects a particular temperature regime.


\section{Observable Algebra}
\label{sec:obs-algebra}

\subsection{Observables and posterior expectations}

An \textit{observable} is a measurable function
\[
f : \Theta \to \mathbb{R}
\]
that is integrable under the tempered posterior $\pi_\beta(\theta \mid D)$ \cite{pathria2011statistical}.
Its expectation
\[
\mathbb{E}_\beta[f]
=
\int f(\theta)\,\pi_\beta(\theta \mid D)\, d\theta
\]
is therefore well-defined. Importantly, expectations depend only on the pushforward of
$\pi_\beta$ through the map $\theta \mapsto p(\cdot \mid \theta)$.
If two parameter values induce identical predictive distributions,
their contribution to posterior averages cannot be distinguished
through data alone. This motivates identifying observables that depend only on the
induced predictive distribution.

\subsection{Equivalence relation and induced distribution space}

Define an equivalence relation
\[
\theta \sim \theta'
\quad \Longleftrightarrow \quad
p(\cdot \mid \theta) = p(\cdot \mid \theta').
\]

Let $\widetilde{\Theta} = \Theta / \sim$ denote the set of equivalence
classes. Each element of $\widetilde{\Theta}$ corresponds uniquely to
a predictive distribution in the model family. The tempered posterior induces a probability measure on
$\widetilde{\Theta}$ via pushforward:
\[
\widetilde{\pi}_\beta(A)
=
\pi_\beta(\{\theta \in \Theta : [\theta] \in A\}).
\]
Thus posterior expectations of distribution-invariant observables
may equivalently be written as
\[
\mathbb{E}_\beta[f]
=
\int_{\widetilde{\Theta}} \tilde{f}([\theta]) \,
d\widetilde{\pi}_\beta([\theta]).
\]
No additional geometric structure on $\widetilde{\Theta}$ is required;
we use it only as a bookkeeping device for predictive equivalence.

\subsection{Distribution-invariant observables}

An observable $f$ is called \emph{distribution-invariant} if
\[
\theta \sim \theta'
\;\Rightarrow\;
f(\theta) = f(\theta').
\]
Such functions descend to well-defined measurable functions
\[
\tilde{f} : \widetilde{\Theta} \to \mathbb{R}.
\]
We refer to the collection of distribution-invariant observables
as the \emph{observable algebra}. Conceptually, these are measurable
functions on the space of predictive distributions. This construction removes variation arising purely from
non-identifiable parameter directions while preserving
all posterior-measurable predictive structure; paralleling the use of gauge-invariant observables in
statistical physics and quantum field theory, where physical quantities
are defined modulo redundant parameterizations.


\paragraph{From equivalence classes to the model image.}
Let
\[
\Phi : \Theta \to \mathcal{M},
\qquad
\Phi(\theta) := p(\cdot \mid \theta),
\]
and define the model image
\[
\mathcal{M} := \Phi(\Theta) = \{\, p(\cdot\mid\theta) : \theta \in \Theta \,\}.
\]
By construction, $\theta \sim \theta'$ if and only if $\Phi(\theta)=\Phi(\theta')$.
Hence there is a canonical bijection
\[
\iota : \widetilde{\Theta} \to \mathcal{M},
\qquad
\iota([\theta]) := \Phi(\theta),
\]
which is well-defined because $\Phi$ is constant on equivalence classes.

\begin{proposition}[Observable representation on $\mathcal{M}$]
\label{prop:observable-representation}
Let $f:\Theta\to\mathbb{R}$ be a measurable function.

\begin{enumerate}
\item If $f$ is distribution-invariant (i.e., $\theta\sim\theta'\Rightarrow f(\theta)=f(\theta')$),
then there exists a measurable functional $\psi:\mathcal{M}\to\mathbb{R}$ such that
\[
f(\theta)=\psi(\Phi(\theta))
\quad \text{for all } \theta\in\Theta.
\]
Equivalently, $f = \psi \circ \Phi$.
\item Conversely, any measurable $\psi:\mathcal{M}\to\mathbb{R}$ defines a distribution-invariant
observable $f=\psi\circ\Phi$ on $\Theta$.
\end{enumerate}
\end{proposition}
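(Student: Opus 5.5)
The plan is the standard factorization argument through the quotient. Since $\mathcal{M}$ has not been given a $\sigma$-algebra, the first step is to fix one, and the natural choice is the final $\sigma$-algebra induced by $\Phi$:
\[
\mathcal{A}_{\mathcal{M}} := \{\, B \subseteq \mathcal{M} : \Phi^{-1}(B) \text{ is measurable in } \Theta \,\}.
\]
This is the largest $\sigma$-algebra on $\mathcal{M}$ for which $\Phi$ is measurable. It is also the structure transported from $\widetilde{\Theta}$ (with its quotient $\sigma$-algebra) along the bijection $\iota$. In particular, the pushforward $\widetilde{\pi}_\beta$ defined earlier is a measure on it.

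\textbf{Part 1.} Given a distribution-invariant $f$, I define $\psi$ on the image by $\psi(\Phi(\theta)) := f(\theta)$, which is the same as $\tilde f \circ \iota^{-1}$. Every $P \in \mathcal{M}$ is of the form $\Phi(\theta)$, since $\mathcal{M}=\Phi(\Theta)$, so $\psi$ is defined everywhere. It is well defined because $\Phi(\theta)=\Phi(\theta')$ means $\theta\sim\theta'$, and then invariance gives $f(\theta)=f(\theta')$. The identity $f=\psi\circ\Phi$ holds by construction.

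For measurability, take a Borel set $B\subseteq\mathbb{R}$. Surjectivity of $\Phi$ onto $\mathcal{M}$ together with $f=\psi\circ\Phi$ gives
\[
\Phi^{-1}(\psi^{-1}(B)) = f^{-1}(B),
\]
and the right-hand side is measurable. So $\psi^{-1}(B)\in\mathcal{A}_{\mathcal{M}}$ by the definition of the final $\sigma$-algebra.

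\textbf{Part 2.} For a measurable $\psi$, the composite $f=\psi\circ\Phi$ is measurable because $\Phi$ is measurable for $\mathcal{A}_{\mathcal{M}}$. It is distribution-invariant because $\theta\sim\theta'$ means $\Phi(\theta)=\Phi(\theta')$, hence $f(\theta)=f(\theta')$.

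\textbf{Main obstacle.} The real difficulty is that measurability of $\psi$ depends on the $\sigma$-algebra placed on $\mathcal{M}$. The argument above is essentially tautological for the final $\sigma$-algebra. If one insists instead on a concrete structure, such as the Borel $\sigma$-algebra of total variation or weak convergence on probability measures, two extra facts are needed.

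First, Part 2 needs $\Phi$ to be Borel measurable. I would get this from continuity of $\theta\mapsto p(\cdot\mid\theta)$ in $L^1$, which is a mild regularity assumption on the model.

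Second, and harder, Part 1 needs the final $\sigma$-algebra to coincide with the Borel one. I would try descriptive set theory. $\Theta$ is a Borel subset of $\mathbb{R}^d$ and hence standard Borel, and $\mathcal{M}$ sits in a Polish space, so $\mathcal{M}$ is analytic. A Blackwell-type or measurable-section argument then shows that any set whose $\Phi$-preimage is Borel is universally measurable. One cannot hope for plain Borel measurability in general, only universal measurability.

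So I would state the proposition with respect to the final $\sigma$-algebra, and remark that under continuity of $\Phi$ the resulting functional $\psi$ is universally measurable for the Borel structure. That is enough for all the posterior integrals that appear later.
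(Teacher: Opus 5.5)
Your factorization $\psi=\tilde f\circ\iota^{-1}$ and the one-line converse are exactly the paper's proof sketch. Equipping $\mathcal{M}$ with the final $\sigma$-algebra of $\Phi$ correctly supplies the measurability of $\psi$, which the paper leaves unaddressed.

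One correction to your closing remark, which is too pessimistic: with $\Theta$ standard Borel, $f$ Borel and $\Phi$ Borel into a Polish space, any $B$ in the final $\sigma$-algebra gives two disjoint analytic sets $B=\Phi(\Phi^{-1}(B))$ and $\mathcal{M}\setminus B=\Phi(\Theta\setminus\Phi^{-1}(B))$. Lusin's separation theorem then yields a Borel $C$ with $B=C\cap\mathcal{M}$, so the final and relative Borel $\sigma$-algebras coincide and $\psi$ is genuinely Borel, not merely universally measurable.
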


\noindent
Thus distribution-invariant observables are precisely measurable functionals of the induced
predictive distribution.

\begin{proof}[Proof sketch]
If $f$ is distribution-invariant, define $\tilde{f}([\theta]) := f(\theta)$ on $\widetilde{\Theta}$,
which is well-defined by invariance. Set $\psi := \tilde{f}\circ \iota^{-1}$ to obtain
$f = \psi\circ\Phi$. The converse is immediate since $\Phi(\theta)=\Phi(\theta')$ implies
$\psi(\Phi(\theta))=\psi(\Phi(\theta'))$.
\end{proof}

This quotient construction ensures that response functions depend only on predictive distributions and are therefore invariant to non-identifiable parameter directions.

\subsection{Identifiable versus statistically meaningful}

Identifiability is a local property: a parameter may be locally identifiable
in a neighborhood yet globally non-identifiable due to symmetry.
Conversely, some non-identifiable directions may carry no predictive
consequence.

The observable algebra isolates functions that are well-defined on the
space of predictive distributions. In regular models, the equivalence
classes are typically discrete and $\mathcal{O}$ coincides with the full
function space. In singular models, the quotient becomes nontrivial and
guides the construction of meaningful order parameters.

\subsection{Examples}

\paragraph{Symmetric Gaussian mixture.}
Consider a two-component Gaussian mixture with symmetric means
$\pm \mu$ and equal weights. The parameterizations $(\mu)$ and $(-\mu)$
induce identical predictive distributions. The observable $\mu$
is not distribution-invariant, while $|\mu|$ is.
Thus $|\mu|$ represents a well-defined element of $\mathcal{O}$.

\paragraph{Reduced-rank regression.}
Let $B = UV^\top$ with $U \in \mathbb{R}^{p \times r}$ and
$V \in \mathbb{R}^{q \times r}$. For any invertible
$R \in \mathrm{GL}(r)$,
\[
(U,V) \mapsto (UR, V R^{-T})
\]
leaves $B$ unchanged. Individual entries of $U$ and $V$
are not distribution-invariant, while functions of $B$
(e.g., its singular values) are.

\paragraph{Neural networks with permutation symmetry.}
Consider a one-hidden-layer neural network with $H$ hidden units,
\[
f_\theta(x)
=
\sum_{h=1}^H a_h \,\sigma(w_h^\top x),
\]
with parameters $\theta = \{(a_h, w_h)\}_{h=1}^H$.
For any permutation $\pi$ of $\{1,\dots,H\}$,
\[
(a_h, w_h) \mapsto (a_{\pi(h)}, w_{\pi(h)})
\]
leaves the induced predictive function unchanged. Thus parameter
configurations related by hidden-unit permutations lie in the same
equivalence class. Individual weight vectors $w_h$ are not distribution-invariant,
whereas functionals of the induced predictor $f_\theta$, such as its
$L^2$ norm on the data distribution or spectral properties of its
Jacobian, are invariant observables. 

Across these examples, the observable algebra identifies gauge-invariant
quantities that respond meaningfully under posterior tempering.
\section{Thermodynamic Response Functions}
\label{sec:responses}

Posterior tempering introduces a one-parameter family of probability measures
$\{\pi_\beta(\theta \mid D)\}_{\beta>0}$ on parameter space.
As a consequence, an integrable observable $f$ induces a scalar function of inverse
temperature via its tempered expectation $\mathbb{E}_\beta[f]$.
The goal of this section is to formalize how these expectations change with $\beta$
and to organize the resulting derivatives and variances into a hierarchy of
\emph{response functions}. All response functions considered below are defined for distribution-invariant observables, ensuring that the resulting quantities depend only on predictive distributions rather than parameterization.

We begin with a universal covariance identity, which is purely measure-theoretic
and holds without regularity assumptions beyond integrability.
We then show that when $f$ is distribution-invariant (in the sense of Section~\ref{sec:obs-algebra}),
the same identities may be expressed intrinsically on the model image
$\mathcal{M}=\Phi(\Theta)$, clarifying that response functions depend on predictive
structure rather than parameterization.

\subsection{Universal covariance identity}

Let
\[
\ell(\theta) := \log p(D \mid \theta)
\]
denote the log-likelihood.

\begin{proposition}[Covariance identity]
\label{prop:covariance}
For any observable $f$ integrable under $\pi_\beta$,
\[
\frac{d}{d\beta} \mathbb{E}_\beta[f]
=
\mathrm{Cov}_\beta(f,\ell).
\]
\end{proposition}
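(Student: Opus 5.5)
I will write the tempered expectation as a ratio of two integrals and differentiate under the integral sign. Set
\[
Z(\beta)=\int \pi(\theta)\,e^{\beta\ell(\theta)}\,d\theta,
\qquad
N_f(\beta)=\int f(\theta)\,\pi(\theta)\,e^{\beta\ell(\theta)}\,d\theta,
\]
so that $\mathbb{E}_\beta[f]=N_f(\beta)/Z(\beta)$. Formally, $\partial_\beta e^{\beta\ell}=\ell\,e^{\beta\ell}$. This gives $Z'(\beta)=Z(\beta)\,\mathbb{E}_\beta[\ell]$ and $N_f'(\beta)=Z(\beta)\,\mathbb{E}_\beta[f\ell]$. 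The quotient rule then yields
\[
\frac{d}{d\beta}\mathbb{E}_\beta[f]=\frac{N_f'}{Z}-\frac{N_f Z'}{Z^2}
=\mathbb{E}_\beta[f\ell]-\mathbb{E}_\beta[f]\,\mathbb{E}_\beta[\ell]=\mathrm{Cov}_\beta(f,\ell).
\]
As a by-product, this also records $\frac{d}{d\beta}\log Z(\beta)=\mathbb{E}_\beta[\ell]$, the case $f\equiv 1$ applied to $\log Z$. That identity will be reused for the free energy later.

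The only substantive step is justifying the interchange of $d/d\beta$ and the integral. Integrability of $f$ under the single measure $\pi_\beta$ is not enough, so I will interpret the hypothesis in the natural way: $f$, $\ell$ and $f\ell$ are integrable under $\pi_{\beta'}$ for all $\beta'$ in an open neighbourhood $(\beta-\delta,\beta+\delta)$ of $\beta$. I will state this explicitly as the standing assumption under which the covariance is finite.

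Under this assumption I will apply dominated convergence via the mean value theorem. For $|h|<\delta/2$,
\[
\Bigl|\frac{e^{(\beta+h)\ell}-e^{\beta\ell}}{h}\Bigr|\le |\ell|\,\bigl(e^{(\beta-\delta/2)\ell}+e^{(\beta+\delta/2)\ell}\bigr),
\]
because $e^{s\ell}$ is monotone in $s$ for fixed $\theta$, so its value on the segment is bounded by the endpoint values. Multiplying by $|f|\pi$ gives an integrable dominating function, since $f\ell$ is integrable at $\beta\pm\delta/2$. The same bound with $f\equiv 1$ handles $Z$. Positivity $Z(\beta)>0$ is immediate because $\pi$ is a probability density and $e^{\beta\ell}>0$, so the quotient rule applies.

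The main obstacle is therefore only this domination or moment condition. In the singular examples, where $\ell$ is bounded above and $\pi$ has suitable moments, it holds on all of $\beta>0$. Elsewhere one restricts to the open set of $\beta$ on which these moments are finite. Apart from this point the proof is the routine quotient-rule computation above.
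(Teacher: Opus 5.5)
Your proposal is correct and is the paper's own argument: write $\mathbb{E}_\beta[f]$ as a ratio of integrals, differentiate numerator and denominator, and apply the quotient rule to get $\mathbb{E}_\beta[f\ell]-\mathbb{E}_\beta[f]\,\mathbb{E}_\beta[\ell]$. The paper differentiates under the integral sign without comment. Your dominated-convergence bound, together with your observation that integrability of $f$ under the single measure $\pi_\beta$ must be strengthened to integrability of $f$, $\ell$ and $f\ell$ on a neighbourhood of $\beta$, is therefore a needed tightening of the stated hypothesis rather than a change of method.
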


\begin{proof}
Write
\[
\mathbb{E}_\beta[f]
=
\frac{\int f(\theta)\pi(\theta)p(D\mid\theta)^\beta d\theta}
{\int \pi(\theta)p(D\mid\theta)^\beta d\theta}.
\]

Differentiating numerator and denominator with respect to $\beta$
and applying the quotient rule yields

\[
\frac{d}{d\beta}\mathbb{E}_\beta[f]
=
\mathbb{E}_\beta[f\ell] - \mathbb{E}_\beta[f]\mathbb{E}_\beta[\ell],
\]

which is precisely $\mathrm{Cov}_\beta(f,\ell)$.
\end{proof}

This identity shows that the sensitivity of any observable to the
tempering parameter is governed by its covariance with the log-likelihood.
Fluctuations in posterior geometry therefore determine how observables
respond to changes in effective temperature.

\paragraph{From parameter space to distribution space.}
Proposition~\ref{prop:covariance} is stated on parameter space $\Theta$, but in singular models many
distinct $\theta$ correspond to the same predictive distribution.
Section~\ref{sec:obs-algebra} formalized this through the model map $\Phi(\theta)=p(\cdot\mid\theta)$
and the model image $\mathcal{M}=\Phi(\Theta)$.
The next proposition records a useful consequence: when $f$ is distribution-invariant,
both the expectation $\mathbb{E}_\beta[f]$ and its $\beta$-derivative are determined
entirely by the pushforward posterior on $\mathcal{M}$. In this sense, the response identities are \emph{intrinsic} to the statistical model
rather than the chosen parameterization.

\begin{proposition}[Response identities descend to the model image]
\label{prop:descent}

Let $\Phi(\theta)=p(\cdot\mid\theta)$ and let $\mathcal{M}=\Phi(\Theta)$ be the model image.
If $f$ is distribution-invariant, i.e.\ $f=\psi\circ\Phi$ for some measurable $\psi:\mathcal{M}\to\mathbb{R}$,
then for every $\beta>0$,
\[
\mathbb{E}_\beta[f]
=
\int_{\mathcal{M}} \psi(p)\, d\widetilde{\pi}_\beta(p),
\qquad
\widetilde{\pi}_\beta := \Phi_\# \pi_\beta,
\]
and the thermodynamic response identity may be expressed entirely on $\mathcal{M}$:
\[
\frac{d}{d\beta}\mathbb{E}_\beta[f]
=
\mathrm{Cov}_\beta(f,\ell),
\qquad
\ell(\theta)=\log p(D\mid\theta),
\]
where both $f$ and $\ell$ are constant on equivalence classes and therefore define measurable functionals on $\mathcal{M}$.
\end{proposition}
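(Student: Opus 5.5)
The plan is to combine the change-of-variables formula for pushforward measures with Proposition~\ref{prop:covariance}. The one non-formal ingredient is that the log-likelihood $\ell$ is itself distribution-invariant.

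\emph{Step 1 (expectation on $\mathcal{M}$).} Take $f=\psi\circ\Phi$, which exists by Proposition~\ref{prop:observable-representation}. Equip $\mathcal{M}$ with the $\sigma$-algebra making $\Phi$ measurable; the pushforward $\sigma$-algebra $\{A\subset\mathcal{M}:\Phi^{-1}(A)\text{ measurable}\}$ is the natural choice. Then $\widetilde{\pi}_\beta=\Phi_\#\pi_\beta$ is a well-defined probability measure. The abstract change-of-variables theorem gives
\[
\int_\Theta \psi(\Phi(\theta))\,\pi_\beta(d\theta)=\int_{\mathcal{M}}\psi(p)\,d\widetilde{\pi}_\beta(p).
\]
I would check this first for indicators $\psi=\mathbf 1_A$, where it is the definition of $\widetilde{\pi}_\beta(A)$. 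It then extends to simple functions by linearity and to nonnegative $\psi$ by monotone convergence. Finally it extends to integrable $\psi$ by splitting $\psi=\psi^+-\psi^-$; integrability of $\psi$ under $\widetilde{\pi}_\beta$ is equivalent to integrability of $f$ under $\pi_\beta$, by the same identity applied to $|\psi|$.

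\emph{Step 2 ($\ell$ descends).} Define $L(p):=\sum_{i=1}^n\log p(x_i)$ for $p\in\mathcal{M}$. Then $\ell(\theta)=L(\Phi(\theta))$ identically, so $\ell$ is constant on equivalence classes: $\theta\sim\theta'$ means $p(\cdot\mid\theta)=p(\cdot\mid\theta')$, and in particular the two densities agree at the data points.

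This step contains the main subtlety. Equality of distributions is a priori only almost-everywhere equality of densities, whereas $\ell$ evaluates them pointwise. I would handle this by fixing a version of the density for each $\theta$, taking the model $p(x\mid\theta)$ as a given function, so that $\Phi(\theta)$ denotes that function. Equivalently, I would read $\sim$ as equality of the chosen density functions, which is implicit in the paper's definition. With this convention $L$ is a bona fide functional on $\mathcal{M}$. For measurability of $L$ under the pushforward $\sigma$-algebra, note that $L^{-1}(B)$ pulls back under $\Phi$ to $\ell^{-1}(B)$, which is measurable; hence $L$ is measurable.

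\emph{Step 3 (covariance on $\mathcal{M}$).} Proposition~\ref{prop:covariance} gives $\frac{d}{d\beta}\mathbb{E}_\beta[f]=\mathbb{E}_\beta[f\ell]-\mathbb{E}_\beta[f]\mathbb{E}_\beta[\ell]$. Each term is the expectation of an invariant function, namely $f\ell=(\psi L)\circ\Phi$, $f=\psi\circ\Phi$ and $\ell=L\circ\Phi$. Applying Step 1 to each of the three terms rewrites the right-hand side as $\mathrm{Cov}_{\widetilde{\pi}_\beta}(\psi,L)$, an expression involving only $\mathcal{M}$. This requires $f\ell$ and $\ell$ to be integrable, the same hypotheses already needed for Proposition~\ref{prop:covariance}.

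For completeness, I would also note that the whole family $\widetilde{\pi}_\beta$ is intrinsic to $\mathcal{M}$. Its density with respect to $\widetilde{\pi}_0:=\Phi_\#\pi$ is proportional to $e^{\beta L(p)}$, by Step 1 applied to $\psi\,e^{\beta L}$. This shows that the $\beta$-deformation itself, and not merely its derivative at a single $\beta$, lives on the model image.
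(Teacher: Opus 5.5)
Your proposal is correct and follows the same route as the paper's sketch: the first identity is the pushforward change of variables, $\ell = L\circ\Phi$ is distribution-invariant, and the covariance from Proposition~\ref{prop:covariance} is then rewritten term by term on $\mathcal{M}$. You supply details the paper leaves implicit: the final $\sigma$-algebra on $\mathcal{M}$, a fixed choice of density versions so that $\ell$ is genuinely constant on classes, and the intrinsic density $\propto e^{\beta L}$ of $\widetilde{\pi}_\beta$ with respect to $\Phi_\#\pi$; all of these are sound.
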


\begin{proof}[Proof sketch]
The first identity is the definition of pushforward measure.
Since $\ell(\theta)=\log p(D\mid\theta)$ depends only on the induced predictive distribution, it is distribution-invariant.
Thus covariance and differentiation with respect to $\beta$ are well-defined at the level of $\mathcal{M}$.
\end{proof}

\paragraph{Packaging the consequences.}
Propositions~\ref{prop:covariance}--\ref{prop:descent} together show that tempering endows the posterior with a
linear-response structure: derivatives of expectations are covariances, and
curvatures are variances. It is therefore natural to treat $(\beta,f)\mapsto \mathbb{E}_\beta[f]$ as a basic
thermodynamic object and to define order parameters, susceptibilities, and
complexity measures as responses derived from this covariance structure. These identities hold for any integrable observable and therefore apply independently of whether the statistical model is regular or singular. 
The next theorem summarizes the resulting hierarchy. A schematic view of
these relationships is shown in Figure~\ref{fig:response_hierarchy}. 

\begin{theorem}[Thermodynamic response hierarchy]
\label{thm:response-hierarchy}
Let $\pi_\beta(\theta|D)$ be the tempered posterior and let
$f$ be any observable integrable under $\pi_\beta$. Then derivatives of tempered expectations are governed by
posterior fluctuation structure:
\[
\frac{d}{d\beta} \mathbb{E}_\beta[f]
=
\mathrm{Cov}_\beta(f,\ell).
\]
Consequently:
\begin{itemize}
\item Order parameters correspond to expectations
      $m(\beta)=\mathbb{E}_\beta[f]$.
\item Susceptibilities correspond to fluctuation magnitudes
      $\chi_f(\beta)=\beta\mathrm{Var}_\beta(f)$.
\item Curvature of the log-partition function satisfies
\[
\frac{d^2}{d\beta^2}\log Z(\beta)
=
\mathrm{Var}_\beta(\ell).
\]
\end{itemize}
Thus all response functions of the tempered posterior are generated
by posterior covariance structure. Moreover, higher-order derivatives generate higher-order cumulants of the
log-likelihood under the tempered posterior.
\end{theorem}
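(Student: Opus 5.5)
The plan is to treat $\log Z(\beta)$ as a cumulant generating function and derive everything from Proposition~\ref{prop:covariance}. Write $Z(\beta)=\int \pi(\theta)e^{\beta\ell(\theta)}\,d\theta$, so that $Z(\beta)/Z(\beta_0)=\mathbb{E}_{\beta_0}[e^{(\beta-\beta_0)\ell}]$ for any fixed $\beta_0>0$. Hence $\log Z(\beta)-\log Z(\beta_0)$ is the cumulant generating function of $\ell$ under $\pi_{\beta_0}$, evaluated at $\beta-\beta_0$.

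The first display of the theorem is exactly Proposition~\ref{prop:covariance}. The first two bullets are definitions rather than claims. The only substantive content is that $\chi_f$ is a fluctuation quantity. I would remark that applying Proposition~\ref{prop:covariance} to $f^2$ and $f$ gives $\frac{d}{d\beta}\mathrm{Var}_\beta(f)$ in terms of third mixed moments. More to the point, when $f=\ell$ one gets $\frac{d}{d\beta}\mathbb{E}_\beta[\ell]=\mathrm{Var}_\beta(\ell)$, so $\beta\mathrm{Var}_\beta(\ell)$ is the genuine response of the order parameter $\mathbb{E}_\beta[\ell]$ to $\log\beta$.

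For the curvature statement, I would differentiate under the integral: $\frac{d}{d\beta}\log Z=Z'/Z=\int \ell\,\pi e^{\beta\ell}/Z=\mathbb{E}_\beta[\ell]$. Then I would apply Proposition~\ref{prop:covariance} with $f=\ell$ to get $\frac{d^2}{d\beta^2}\log Z=\mathrm{Cov}_\beta(\ell,\ell)=\mathrm{Var}_\beta(\ell)$.

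For the higher-order claim, I would use the cumulant generating function representation. Since $K_{\beta_0}(t):=\log\mathbb{E}_{\beta_0}[e^{t\ell}]$ has $k$-th derivative at $t=0$ equal to the $k$-th cumulant $\kappa_k^{(\beta_0)}(\ell)$, we get $\frac{d^k}{d\beta^k}\log Z(\beta)\big|_{\beta_0}=\kappa_k^{(\beta_0)}(\ell)$ for all $k\ge1$. Alternatively, one can induct using the covariance identity: the $\beta$-derivative of a cumulant of $\ell$ under $\pi_\beta$ is the next cumulant.

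The main obstacle is justifying the interchange of differentiation and integration, since the statement assumes only integrability. I would handle this by assuming $\beta$ lies in the interior of the set $\{\beta:\int\pi e^{\beta\ell}<\infty\}$. On a neighborhood $[\beta_0-\delta,\beta_0+\delta]$ one then has the domination $|\ell|^k e^{\beta\ell}\le C_{k,\delta}\,(e^{(\beta_0-\delta)\ell}+e^{(\beta_0+\delta)\ell})$, which follows from $|x|^k\le C e^{\delta|x|/2}$. This makes $Z$ smooth (indeed analytic) there by dominated convergence, and it also makes $f\ell$ integrable whenever $f\in L^{1+\epsilon}(\pi_\beta)$ via H\"older. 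I would state these mild exponential-moment conditions explicitly as the precise hypotheses under which the hierarchy holds. In the usual settings (bounded-below likelihood with a proper prior, e.g.\ $p(D\mid\theta)$ bounded above) they are automatic for all $\beta>0$.
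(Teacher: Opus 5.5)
Your proposal is correct, and for everything the paper actually argues it follows the same route. The first display is Proposition~\ref{prop:covariance} (quotient rule), the two bullets are definitions, and the curvature identity comes from $\frac{d}{d\beta}\log Z(\beta)=\mathbb{E}_\beta[\ell]$ followed by the covariance identity with $f=\ell$, exactly as in Section~\ref{sec:nu}.

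Where you go beyond the paper is substantive. The paper only asserts the closing sentence about higher-order cumulants. Your identity $\log Z(\beta)-\log Z(\beta_0)=\log\mathbb{E}_{\beta_0}[e^{(\beta-\beta_0)\ell}]$ makes $\log Z$, recentred at $\beta_0$, the cumulant generating function of $\ell$ under $\pi_{\beta_0}$. That proves the claim in one line, and more cleanly than an induction through moment--cumulant relations.

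The paper also claims nothing beyond integrability is needed. Read pointwise in $\beta$, this is false, so your extra hypotheses are a genuine correction. Take a single Bernoulli observation $x=1$ with uniform prior, so $\pi_\beta(\theta)=(\beta+1)\theta^\beta$ on $(0,1)$, and let $f(\theta)=\theta^{-\beta_0-1}(1+\log(1/\theta))^{-2}$. Then $f\in L^1(\pi_{\beta_0})$, yet $f\ell\notin L^1(\pi_{\beta_0})$ and $\mathbb{E}_\beta[|f|]=\infty$ for all $\beta<\beta_0$. So the first display fails at $\beta_0$ even though the likelihood is bounded. Your $L^{1+\epsilon}$ condition excludes this $f$; so would requiring $f\in L^1(\pi_{\beta'})$ for all $\beta'$ near $\beta$.

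Two small repairs are needed.

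First, $|x|^k\le Ce^{\delta|x|/2}$ gives $|\ell|^k e^{\beta\ell}\le C_{k,\delta}\bigl(e^{(\beta_0-\delta)\ell}+e^{(\beta_0+\delta)\ell}\bigr)$ only for $|\beta-\beta_0|\le\delta/2$, not on all of $[\beta_0-\delta,\beta_0+\delta]$. At $\beta=\beta_0+\delta$ the factor $|\ell|^k$ cannot be absorbed, so shrink the neighbourhood.

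Second, to differentiate $\int f\pi e^{\beta\ell}\,d\theta$ it is not enough that $f\ell\in L^1(\pi_\beta)$ at the single point $\beta$. You need $|f|\,|\ell|\,e^{(\beta'-\beta)\ell}$ dominated in $L^1(\pi_\beta)$ uniformly for $\beta'$ near $\beta$. Your H\"older argument delivers this: for interior $\beta$ and small $\eta>0$, $|\ell|(e^{\eta\ell}+e^{-\eta\ell})\in L^{q}(\pi_\beta)$ with $q=(1+\epsilon)/\epsilon$. State this explicitly. Otherwise the first display rests on Proposition~\ref{prop:covariance}, whose proof in the paper does not justify the interchange of derivative and integral.
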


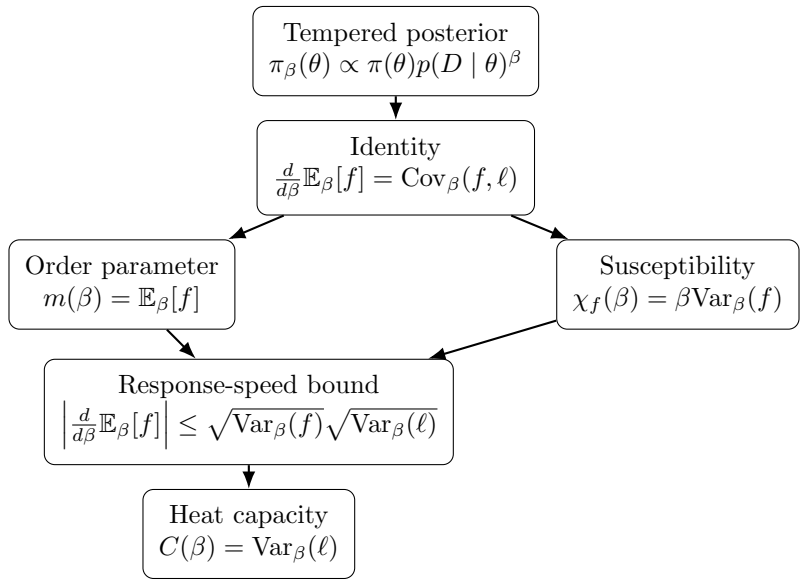
\begin{figure}[t]
\centering
\begin{tikzpicture}[
  box/.style={draw, rounded corners, align=center, inner sep=6pt},
  arr/.style={-Latex, thick}
]
\node[box] (temp) {Tempered posterior\\ $\pi_\beta(\theta)\propto \pi(\theta)p(D\mid\theta)^\beta$};

\node[box, below=9pt of temp] (id) {Identity\\ $\frac{d}{d\beta}\mathbb{E}_\beta[f]=\mathrm{Cov}_\beta(f,\ell)$};

\node[box, below left=9pt and 8pt of id] (op) {Order parameter\\ $m(\beta)=\mathbb{E}_\beta[f]$};
\node[box, below right=9pt and 8pt of id] (sus) {Susceptibility\\ $\chi_f(\beta)=\beta\mathrm{Var}_\beta(f)$};

\node[box, below=11pt of op, xshift=48pt] (bound) {Response-speed bound\\
$\Big|\frac{d}{d\beta}\mathbb{E}_\beta[f]\Big|\le \sqrt{\mathrm{Var}_\beta(f)}\sqrt{\mathrm{Var}_\beta(\ell)}$};

\node[box, below=9pt of bound] (cap) {Heat capacity\\ $C(\beta)=\mathrm{Var}_\beta(\ell)$};

\draw[arr] (temp) -- (id);
\draw[arr] (id) -- (op);
\draw[arr] (id) -- (sus);
\draw[arr] (op) -- (bound);
\draw[arr] (sus) -- (bound);
\draw[arr] (bound) -- (cap);
\end{tikzpicture}
\caption{Order parameters and susceptibilities arise from the covariance identity. The response-speed bound links rates of change to fluctuation magnitudes and connects naturally to heat capacity.}
\label{fig:response_hierarchy}
\end{figure}

\paragraph{Order parameters.}
We begin with first-order responses: expectations of distribution-invariant observables.
Given a distribution-invariant observable $f$, define the
\emph{order parameter}

\[
m(\beta) = \mathbb{E}_\beta[f].
\]
Order parameters track structural properties of the posterior
distribution as $\beta$ varies.
In singular models, these quantities often exhibit sharp changes
associated with structural reorganization of posterior mass.

\paragraph{Susceptibility.}
The variability of an observable under tempering defines a
\emph{susceptibility}
\[
\chi_f(\beta) = \beta\,\mathrm{Var}_\beta(f).
\]
Large susceptibility indicates that the observable fluctuates
strongly under the posterior, signaling sensitivity to changes
in effective temperature. In analogy with statistical physics, peaks in $\chi_f(\beta)$
indicate regions where the posterior reorganizes rapidly
with respect to $\beta$.

\paragraph{Response speed and fluctuation constraints.}
Order parameters describe structural properties of the posterior, while
susceptibilities quantify the magnitude of their fluctuations under tempering.
A natural next question is how rapidly these structural quantities can change
as the inverse temperature $\beta$ varies.
The covariance identity implies that such changes are driven by posterior
fluctuations of the log-likelihood.
Consequently the rate at which an observable can respond to changes in
$\beta$ is limited by the joint variability of the observable and the
log-likelihood. The following proposition provides a quantitative bound on
this response speed.

\begin{proposition}[Response-speed bound]
\label{prop:response-bound}
Let $f$ be an observable integrable under $\pi_\beta$. Then
\[
\left|\frac{d}{d\beta}\mathbb{E}_\beta[f]\right|
=
\left|\mathrm{Cov}_\beta(f,\ell)\right|
\le
\sqrt{\mathrm{Var}_\beta(f)}\;\sqrt{\mathrm{Var}_\beta(\ell)}.
\]
Equivalently, in terms of susceptibility $\chi_f(\beta)=\beta\,\mathrm{Var}_\beta(f)$ and
heat capacity $C(\beta)=\mathrm{Var}_\beta(\ell)$,
\[
\left|\frac{d}{d\beta}\mathbb{E}_\beta[f]\right|
\le
\sqrt{\frac{\chi_f(\beta)}{\beta}}\;\sqrt{C(\beta)}.
\]
\end{proposition}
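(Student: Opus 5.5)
The plan is to combine the covariance identity of Proposition~\ref{prop:covariance} with the Cauchy--Schwarz inequality in $L^2(\pi_\beta)$. The equality $\frac{d}{d\beta}\mathbb{E}_\beta[f]=\mathrm{Cov}_\beta(f,\ell)$ is exactly Proposition~\ref{prop:covariance}, so only the inequality and its reformulation need argument.

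\textbf{Step 1: the inequality.} Set $\bar f := f-\mathbb{E}_\beta[f]$ and $\bar\ell := \ell-\mathbb{E}_\beta[\ell]$. Then $\mathrm{Cov}_\beta(f,\ell)=\mathbb{E}_\beta[\bar f\,\bar\ell]$. Applying Cauchy--Schwarz for the probability measure $\pi_\beta$ gives
\[
\bigl|\mathbb{E}_\beta[\bar f\,\bar\ell]\bigr|\le \bigl(\mathbb{E}_\beta[\bar f^2]\bigr)^{1/2}\bigl(\mathbb{E}_\beta[\bar\ell^2]\bigr)^{1/2}=\sqrt{\mathrm{Var}_\beta(f)}\,\sqrt{\mathrm{Var}_\beta(\ell)}.
\]

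\textbf{Step 2: the reformulation.} Since $\beta>0$, the definition $\chi_f(\beta)=\beta\,\mathrm{Var}_\beta(f)$ gives $\mathrm{Var}_\beta(f)=\chi_f(\beta)/\beta$. Substituting this together with $C(\beta)=\mathrm{Var}_\beta(\ell)$ into Step 1 yields the second displayed form immediately.

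\textbf{Main obstacle: integrability.} The statement only assumes $f$ is integrable under $\pi_\beta$, but a covariance requires second moments. I would handle this by cases.
\begin{itemize}
\item If $\mathrm{Var}_\beta(f)$ or $\mathrm{Var}_\beta(\ell)$ is infinite while the other is nonzero, the right-hand side is $+\infty$ and the inequality is trivial. (Under the convention $0\cdot\infty=0$, a vanishing variance forces the corresponding centered function to be zero $\pi_\beta$-a.e., so the covariance is also zero.)
\item When both variances are finite, $\bar f\bar\ell$ is integrable by Cauchy--Schwarz itself. This finiteness is also what licenses the implicit differentiation under the integral sign in Proposition~\ref{prop:covariance}: the difference quotients of $p(D\mid\theta)^\beta$ are $\ell\, p^{\beta'}$ for $\beta'$ between the endpoints, and dominated convergence applies on a neighborhood of $\beta$ given $\ell$ and $f\ell$ moments there.
\end{itemize}
I would state this as a standing assumption that $f,\ell\in L^2(\pi_\beta)$ locally uniformly in $\beta$, inherited from the hypotheses under which Proposition~\ref{prop:covariance} holds, rather than reprove it.
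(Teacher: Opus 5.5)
Your proposal is correct and matches the paper's proof: invoke Proposition~\ref{prop:covariance} for the equality, apply Cauchy--Schwarz in $L^2(\pi_\beta)$ to the centered functions, and substitute $\mathrm{Var}_\beta(f)=\chi_f(\beta)/\beta$. Your integrability discussion is a sound refinement that the paper leaves implicit, but note that finite variances at the single point $\beta$ do not by themselves license differentiating under the integral. What does license it is the neighborhood (locally uniform) condition you end up assuming.
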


\begin{proof}
By Proposition~\ref{prop:covariance},
$\frac{d}{d\beta}\mathbb{E}_\beta[f]=\mathrm{Cov}_\beta(f,\ell)$.
Cauchy--Schwarz yields \\$|\mathrm{Cov}_\beta(f,\ell)|
\le \sqrt{\mathrm{Var}_\beta(f)}\sqrt{\mathrm{Var}_\beta(\ell)}$.
\end{proof}

\noindent
Proposition~\ref{prop:response-bound} shows that rapid changes in order
parameters under tempering are necessarily accompanied by large
fluctuations in either the order parameter itself (large susceptibility)
or the log-likelihood (large heat capacity), providing a quantitative
basis for using peaks of these response functions as indicators of
structural reorganization.

\paragraph{Remark (Fluctuation–uncertainty relation).}
Proposition~\ref{prop:response-bound} resembles classical uncertainty
relations in Fourier analysis and statistical mechanics: the sensitivity
of an observable to changes in the control parameter $\beta$ is bounded
by the joint fluctuations of that observable and the log-likelihood.

\subsection{Complexity responses}

Certain response functions correspond directly to widely-used
Bayesian complexity measures.

\paragraph{WAIC complexity.}

Define

\[
p_{\mathrm{WAIC}}
=
\sum_{i=1}^n
\mathrm{Var}_\beta\!\left(\log p(y_i \mid \theta)\right).
\]

This quantity measures predictive variance across the posterior
and acts as an effective complexity correction in WAIC \cite{watanabe2010asymptotic, gelman2014understanding}. Because it is a variance of predictive log-likelihood terms, WAIC complexity
naturally appears as a second-order response quantity in the tempered posterior. 

Because $\log p(y_i \mid \theta)$ depends only on the induced predictive distribution, the WAIC complexity is invariant to
non-identifiable parameter directions. If two parameter values
$\theta$ and $\theta'$ induce the same predictive distribution,
then
\[
\log p(y_i \mid \theta) = \log p(y_i \mid \theta').
\]
Consequently posterior variability that occurs purely along
non-identifiable directions does not contribute to the WAIC
complexity term. In this sense WAIC measures predictive
fluctuation rather than parameter fluctuation. This invariance aligns naturally with the observable framework
of Section~\ref{sec:obs-algebra}: WAIC is determined entirely by
distribution-invariant observables.

\paragraph{Heat capacity.}

The variance of the log-likelihood

\[
C(\beta)
=
\mathrm{Var}_\beta(\ell)
\]

plays a role analogous to heat capacity in thermodynamics.
It measures fluctuations of the log-likelihood under the tempered
posterior. In statistical physics, heat capacity measures fluctuations of system
energy and typically peaks near phase transitions, where the system
reorganizes between competing configurations \cite{pathria2011statistical, callen1985thermodynamics}. An analogous phenomenon
occurs here. When posterior mass shifts between regions of parameter
space that induce qualitatively different predictive behavior, the
log-likelihood can fluctuate strongly under tempering. These
fluctuations produce peaks in $C(\beta)$. In singular models such reorganizations frequently occur near
degenerate parameter configurations where multiple predictive
structures coexist (for example, mixture component merging or rank
collapse). Heat-capacity peaks therefore act as empirical indicators
of structural transitions in posterior geometry.

\subsection{Unifying view}

Order parameters, susceptibilities, and complexity measures all arise
from derivatives of expectations under tempering. The covariance identity implies that these quantities ultimately derive
from the same fluctuation structure of the posterior distribution.
Within this framework, commonly used criteria such as WAIC and WBIC
can be interpreted as particular response functions evaluated at
specific temperatures. Taken together, these response functions distinguish between
predictive structural change and parameter redundancy.
Heat capacity detects reorganization of posterior support,
while WAIC measures predictive variability independent of
non-identifiable parameter directions.

\section{Singular Fluctuation as a Curvature Response}
\label{sec:nu}

The response-function framework developed above provides a natural
interpretation of classical quantities from singular learning theory.
In particular, the singular fluctuation can be viewed as a curvature
response of the tempered free energy.

\subsection{Free energy curvature}

Recall the tempered partition function
\[
Z(\beta)
=
\int \pi(\theta)\, p(D \mid \theta)^\beta d\theta,
\]
and the associated free energy
\[
F(\beta)
=
- \frac{1}{\beta} \log Z(\beta).
\]
Differentiating $\log Z(\beta)$ yields
\[
\frac{d}{d\beta}\log Z(\beta)
=
\mathbb{E}_\beta[\ell],
\qquad
\ell(\theta)=\log p(D\mid\theta).
\]
A second derivative gives
\[
\frac{d^2}{d\beta^2}\log Z(\beta)
=
\mathrm{Var}_\beta(\ell).
\]
Thus the variance of the log-likelihood corresponds to the curvature
of the log-partition function with respect to the tempering parameter.
In thermodynamic language this quantity plays the role of a heat
capacity.

Statistically, this curvature quantifies the extent to which
distinct parameter regions provide competing explanations
of the data. If the posterior concentrates around a single
predictive configuration, the log-likelihood is nearly constant
under $\pi_\beta$ and the curvature is small.
In contrast, when multiple structurally different explanations
coexist—such as different mixture allocations, ranks, or network
representations—the log-likelihood fluctuates substantially
across posterior samples, producing large curvature.

\subsection{Singular fluctuation}

Singular learning theory introduces the quantity
$\nu$, known as the \emph{singular fluctuation}, which governs
posterior variability of the log-likelihood and appears in
predictive risk expansions. In regular models, $\nu = d/2$, while in singular models $\nu$ generally differs from the
real log canonical threshold $\lambda$. 

Within the response framework, $\nu$ may be interpreted as the
leading-order contribution of log-likelihood fluctuations to the curvature of the tempered
free energy in the large-sample limit. Because these fluctuations correspond to curvature of the
log-partition function, the singular fluctuation plays the role
of a curvature response associated with the geometry of the
posterior distribution. This suggests that $\nu$ measures the predictive instability arising when multiple distinct parameter configurations provide competing explanations of the data.

\subsection{Relationship to WAIC and WBIC}

WAIC estimates predictive risk by measuring posterior
variance of pointwise log-likelihood terms \cite{gelman2014understanding}. As discussed in
Section~\ref{sec:responses}, this variance arises from the same
fluctuation structure that determines the curvature of the
free energy. WBIC evaluates the expectation of the log-likelihood under a
tempered posterior with inverse temperature $\beta_n = 1/ \log n$. This temperature corresponds to the regime in which the marginal
likelihood asymptotics of singular learning theory become visible.
From the response perspective, WBIC probes the free-energy landscape
at a temperature where curvature effects governed by $\lambda$
and $\nu$ determine model complexity. Together these relationships place WAIC, WBIC, and singular
fluctuation within a common response-function hierarchy
generated by posterior tempering, summarized in Table~\ref{tab:response_hierarchy}.

\begin{sidewaystable}
\centering
\begin{tabular}{llll}
\hline
Response level & Thermodynamic quantity & Statistical quantity & Interpretation \\
\hline
Order parameter & $m(\beta)=\mathbb{E}_\beta[f]$ & Structural observable & Effective model structure \\
Susceptibility & $\chi_f(\beta)=\beta\,\mathrm{Var}_\beta(f)$ & Structural fluctuation & Sensitivity of structure to tempering \\
Heat capacity & $C(\beta)=\mathrm{Var}_\beta(\ell)$ & Log-likelihood fluctuation & Posterior competition between explanations \\
Predictive response & $p_{\mathrm{WAIC}}$ & WAIC complexity & Predictive variability across posterior \\
Free-energy slope & RLCT $\lambda$ & Marginal likelihood scaling & Leading asymptotic complexity \\
Free-energy curvature & Singular fluctuation $\nu$ & Predictive risk correction & Curvature of tempered free energy \\
Special temperature probe & WBIC ($\beta_n=1/\log n$) & WBIC estimator & Marginal likelihood approximation regime \\
\hline
\end{tabular}
\caption{Hierarchy of thermodynamic response quantities generated by posterior tempering and their relationship to classical quantities from singular learning theory.}
\label{tab:response_hierarchy}
\end{sidewaystable}


\section{Experiments}
\label{sec:experiments}

We empirically investigate the response-function framework on three
canonical classes of singular statistical models:

\begin{enumerate}
\item Symmetry breaking in mixture models
\item Rank collapse in reduced-rank regression
\item Hidden-unit redundancy in neural networks
\end{enumerate}

These three examples correspond to three canonical mechanisms that generate singular statistical models: symmetry, algebraic constraints, and parameter redundancy. In each case we evaluate observables under a
tempered posterior

\[
\pi_\beta(\theta) \propto \pi(\theta)p(D|\theta)^\beta
\]

across a logarithmic grid of inverse temperatures $\beta$ spanning several orders of
magnitude to capture both prior-dominated and likelihood-dominated regimes. The experiments examine how order parameters, susceptibilities, and predictive complexity respond to posterior concentration. Posterior samples at each temperature are obtained using Hamiltonian Monte Carlo with temperature-specific likelihood scaling.

Across all experiments we report three quantities:

\begin{itemize}
\item Order parameter $m(\beta)$ describing the effective number of
active components in the model. In each experiment the order parameter is chosen to represent an observable
quantity that reflects the effective structural degrees of freedom used by the
model.
\item Susceptibility
\[
\chi(\beta) = \beta \, \mathrm{Var}_\beta(m)
\]
which measures fluctuations of the order parameter
\item WAIC complexity $p_{\mathrm{WAIC}}(\beta)$
\end{itemize}

The susceptibility plays the role of a thermodynamic response
function and detects transitions in posterior structure. 

Across the three experiments we observe a consistent response hierarchy
(Figures~\ref{fig:exp1_response}--\ref{fig:exp3_response}), in which
order parameters change smoothly with temperature, susceptibilities
exhibit peaks near structural transitions, and predictive complexity
declines as redundant structure collapses.

\begin{figure}[p]
\centering
\includegraphics[width=0.9\linewidth]{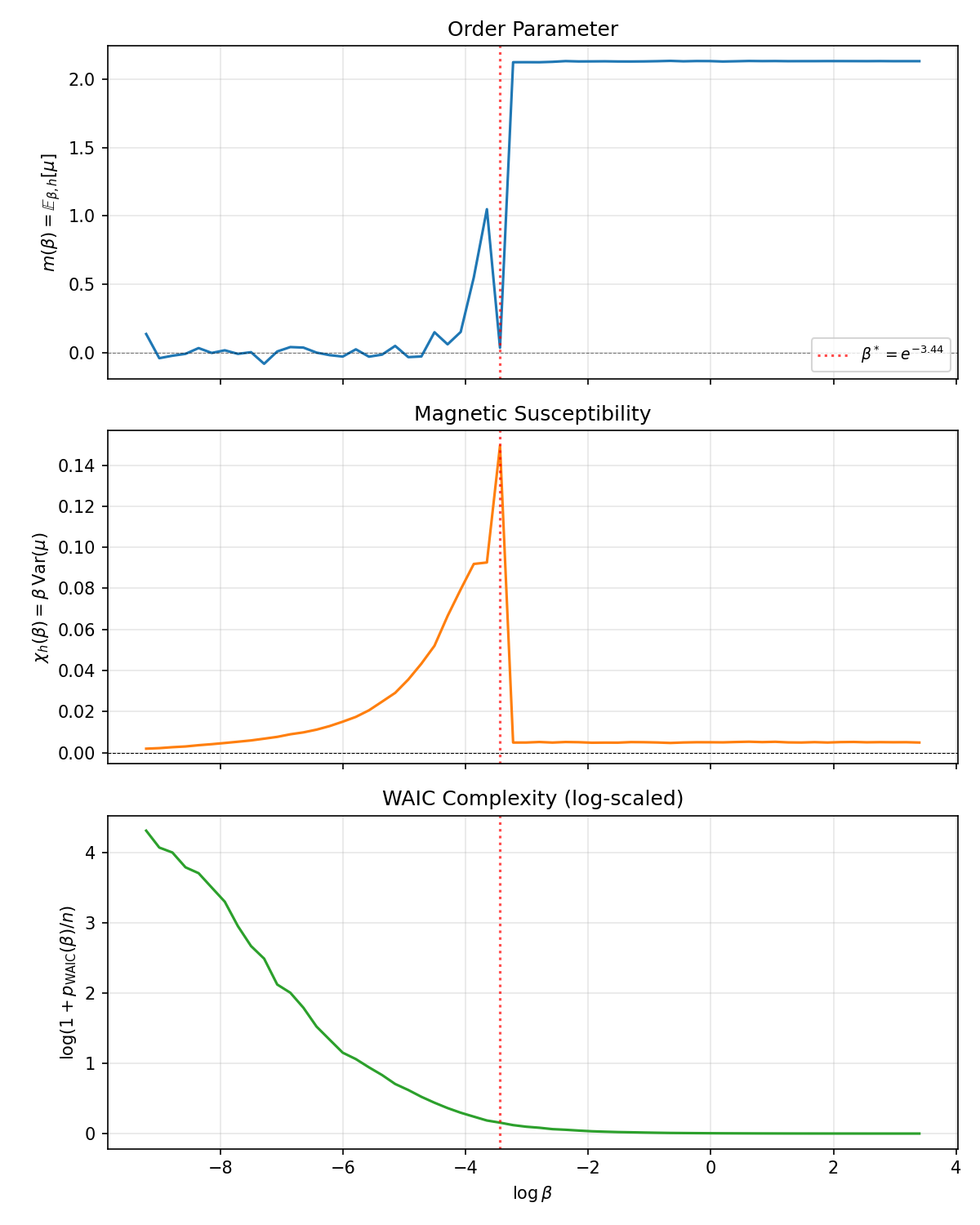}
\caption{
Response hierarchy for the mixture symmetry-breaking experiment. The top panel shows the order parameter 
$m(\beta) = \mathbb{E}_{\beta}[|\mu|]$, which measures the posterior preference for one component mean over the symmetric configuration.
At low inverse temperature the posterior explores both symmetric modes. As $\beta$ increases the posterior concentrates on one mode, producing spontaneous symmetry breaking. The middle panel shows the susceptibility $\chi(\beta) = \beta \mathrm{Var}(|\mu|)$,
which peaks near the transition where the posterior fluctuates between symmetric configurations. The bottom panel shows the WAIC complexity 
$\log(1+p_{\mathrm{WAIC}}(\beta)/n)$.
Predictive variance decreases as the posterior concentrates, indicating reduced predictive uncertainty once symmetry is broken. The vertical dashed line marks the temperature where susceptibility is maximal.
}
\label{fig:exp1_response}
\end{figure}

\subsection{Symmetry breaking in mixture models}

The first experiment studies a simple Gaussian mixture model with
exchangeable components.  Mixture models possess permutation
symmetries: swapping component labels leaves the likelihood
unchanged.  This symmetry produces singular posterior geometry
when components overlap.

As $\beta$ increases, posterior mass concentrates onto
configurations where only a subset of mixture components are
effectively active.  The order parameter measures the effective
number of occupied components.

The results show three regimes:

\begin{enumerate}
\item Low $\beta$: components contribute roughly equally
\item Intermediate $\beta$: symmetry begins to break
\item High $\beta$: one configuration dominates
\end{enumerate}

This behavior is illustrated in Figure~\ref{fig:exp1_response}, which shows
the order parameter, susceptibility, and WAIC complexity as functions of
the inverse temperature. The susceptibility exhibits a peak near the transition between the
symmetric and symmetry-broken regimes, indicating maximal
fluctuation of component allocations.

\subsection{Rank collapse in reduced-rank regression}

The second experiment examines reduced-rank regression (RRR), a
classical singular model where the parameter space contains matrices
of bounded rank. Let $Y = XB + \epsilon$ where $B$ is constrained to have rank
$r$.  Singularities occur when singular values of $B$ approach zero,
causing directions in parameter space to become unidentifiable.

The order parameter is defined using the effective rank of the
estimated coefficient matrix.  As $\beta$ increases the posterior
prefers lower-rank structures consistent with the data. Again we observe a susceptibility peak at the point where rank reduction
occurs, as shown in Figure~\ref{fig:exp2_response}. This reflects strong posterior fluctuations
between models with different effective dimensionality. 

\begin{figure}[p]
\centering
\includegraphics[width=0.9\linewidth]{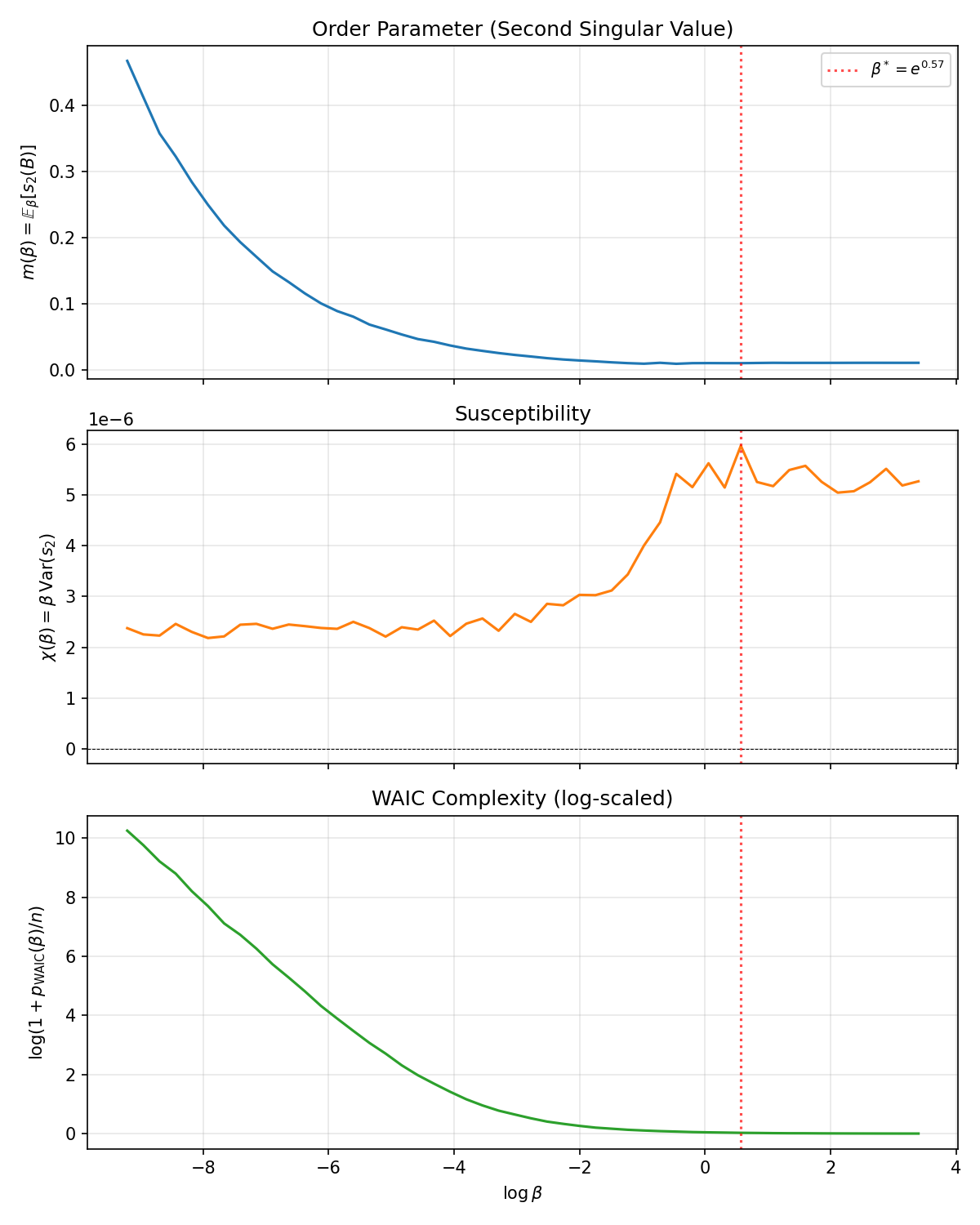}
\caption{
Response hierarchy for reduced-rank regression.
The order parameter $m(\beta)=\mathbb{E}_\beta[s_2(B)]$
tracks the second singular value of the regression matrix.
As $\beta$ increases, posterior concentration drives the second singular value toward zero,
indicating collapse to a lower-rank model. The susceptibility 
$\chi(\beta)=\beta\mathrm{Var}(s_2)$
measures fluctuations in the effective rank and peaks near the temperature where rank collapse occurs. The WAIC complexity decreases as the posterior eliminates redundant directions in parameter space.
The alignment between susceptibility and predictive complexity illustrates how singular
structure controls predictive variability.
}
\label{fig:exp2_response}
\end{figure}

\subsection{Hidden unit collapse in neural networks}

The third experiment studies a two-layer neural network trained on a
teacher–student regression task.  The model is
\[
f(x) = \sum_{j=1}^H a_j \phi(w_j x + b_j) + c .
\]
Neural networks exhibit several sources of singularity: permutation symmetry of hidden units, scaling degeneracies between $a_j$ and $w_j$, and redundancy of hidden units \cite{amari1998natural, watanabe2001algebraic}. To characterize active units we define amplitudes

\[
s_j = |a_j||w_j|
\]
and compute an effective number of active units
\[
N_{\mathrm{eff}}
=
\frac{\left(\sum_j s_j\right)^2}{\sum_j s_j^2}.
\]
This quantity equals approximately the number of hidden units that
meaningfully contribute to the network output. The susceptibility is computed as

\[
\chi(\beta)
=
\beta \, \mathrm{Var}_\beta(N_{\mathrm{eff}}).
\]

The resulting response hierarchy is shown in Figure~\ref{fig:exp3_response}.
As $\beta$ increases redundant units collapse and the effective
number of active units decreases.  The susceptibility exhibits a
pronounced peak where multiple configurations with different
numbers of active units coexist. The WAIC complexity follows a similar trend, confirming that
predictive variability is highest in the same region where order
parameter fluctuations are largest.

\begin{figure}[p]
\centering
\includegraphics[width=0.9\linewidth]{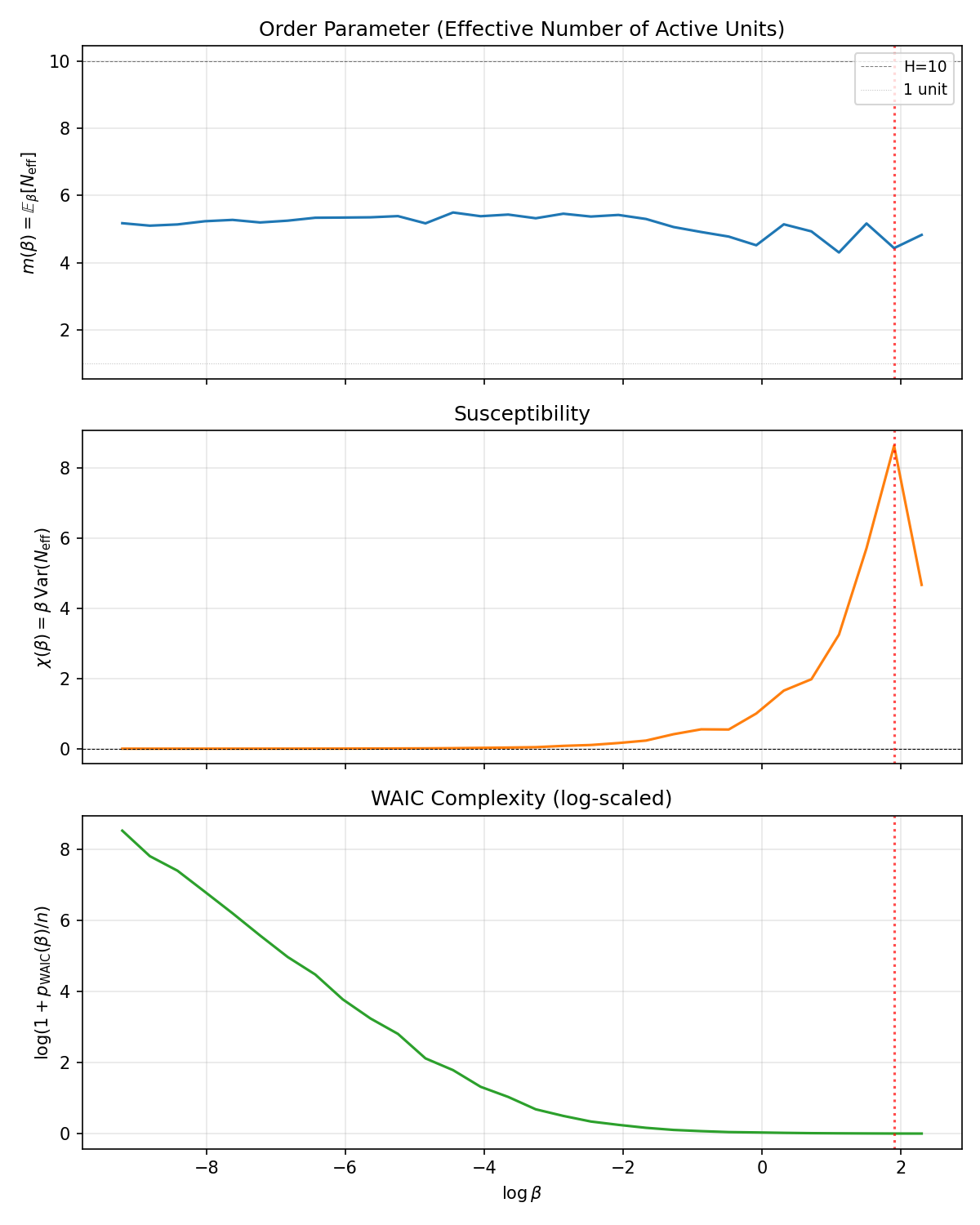}
\caption{
Response hierarchy for the neural network hidden-unit collapse experiment.
The order parameter 
$m(\beta)=\mathbb{E}_\beta[N_{\mathrm{eff}}]$
measures the effective number of active hidden units. Although the network contains $H=10$ units, the posterior favors a smaller effective number as $\beta$ increases.
Redundant hidden units become inactive due to symmetry and scaling degeneracies. The susceptibility
$\chi(\beta)=\beta\mathrm{Var}(N_{\mathrm{eff}})$
peaks when multiple configurations with different numbers of active units coexist.
This region corresponds to maximal posterior uncertainty over network representations. The WAIC complexity decreases as redundant units collapse, indicating that predictive uncertainty is highest when the network's internal representation is unstable.
}
\label{fig:exp3_response}
\end{figure}

\subsection{Summary of empirical findings}

Across all three models we observe the same qualitative behavior:

\begin{enumerate}
\item Order parameters reveal how many effective degrees of freedom
are active in the posterior.
\item Susceptibilities peak near transitions between different
posterior structures.
\item WAIC complexity tracks predictive variability and aligns with
regions of large susceptibility.
\end{enumerate}

These results support the interpretation that information criteria
such as WAIC correspond to response functions of the tempered
posterior distribution.  Singular learning phenomena therefore
appear naturally within a thermodynamic response hierarchy.

\section{Discussion}\label{sec:discussion}

\subsection{Thermodynamic interpretation of singular learning}

The results suggest that posterior geometry in singular statistical models
exhibits a thermodynamic response structure. Across three qualitatively different singular mechanisms—symmetry
breaking in mixture models, rank collapse in reduced-rank regression,
and hidden-unit redundancy in neural networks—we observe the same
qualitative behavior. As the inverse temperature $\beta$ increases, posterior mass
concentrates onto lower-dimensional regions of parameter space where
the model effectively uses fewer degrees of freedom.
Near these transitions the posterior fluctuates between competing
configurations, producing peaks in the susceptibility
$\chi(\beta)=\beta\mathrm{Var}_\beta(m)$. This behavior closely parallels phase transitions in statistical
physics, where macroscopic observables respond sharply to changes in
temperature.  In the statistical setting the control parameter
$\beta$ governs posterior concentration rather than physical energy,
but the resulting response structure appears analogous.

Importantly, the response framework does not rely on asymptotic expansions or
model-specific algebraic analysis. Instead, it identifies observable quantities
whose behavior under tempering reflects the same geometric structure that
drives singular learning theory. In this sense, response functions provide a
finite-sample diagnostic analogue of asymptotic invariants such as the RLCT
and singular fluctuation.

\subsection{Information criteria as response functions}

The experiments also suggest a thermodynamic interpretation of
predictive information criteria. The WAIC complexity term measures the posterior variance of
pointwise log likelihood contributions and therefore quantifies
fluctuations in predictive performance. Empirically we observe that WAIC complexity changes most rapidly near
the same temperatures where the susceptibility of structural order
parameters is largest. This supports the interpretation that predictive information
criteria act as response functions of the tempered posterior
distribution. Under this view, quantities such as WAIC and WBIC describe how
predictive performance responds to changes in posterior
concentration.

\subsection{Observables and identifiable structure}

A central motivation for the observable framework introduced in
Section~\ref{sec:obs-algebra} is that singular models contain
directions in parameter space that do not correspond to distinct
probability distributions. Permutation symmetries, scaling degeneracies, and redundant
parameterizations all introduce gauge directions that are invisible
to prediction. Observables correspond to functions defined on the space of
probability distributions rather than the parameter space itself.
Because they depend only on the induced predictive distribution,
observables are invariant to these unidentifiable directions.
This perspective clarifies why quantities such as WAIC remain well
defined even when the Fisher information matrix is singular.

\subsection{Practical implications}

Viewing singular models through the lens of response functions may
also provide practical insights. Susceptibility peaks identify regions where posterior geometry
changes rapidly, which may correspond to model selection boundaries
or transitions between effective model structures. For example, the hidden-unit collapse experiment shows that the
effective number of active units fluctuates strongly near the
transition where redundant units become inactive.
Such behavior may provide a useful diagnostic for understanding
capacity and redundancy in overparameterized models.

\subsection{Limitations}

The experiments presented here are illustrative rather than
exhaustive. They demonstrate consistent qualitative behavior across several
canonical singular models but do not yet provide a full theoretical
characterization of the response functions. In addition, exploring tempered posteriors across a wide range of
temperatures can be computationally expensive for large models.
Developing more efficient estimators of these response quantities
remains an open problem.

\subsection{Future directions}

Several theoretical directions emerge from this perspective. First, the covariance identities underlying the response functions
suggest the existence of a more general response operator acting on
observables. Understanding the spectral properties of this operator may reveal
how posterior geometry organizes singular models. Second, the analogy with statistical physics suggests that
renormalization group ideas may provide insight into how effective
model structure changes with scale \cite{wilson1975renormalization}. Finally, extending this framework to uncertainty quantification in
singular models may provide new tools for understanding predictive
variability in overparameterized systems.

\section{Conclusion}

This paper proposes a thermodynamic perspective on singular
statistical models based on observable quantities defined on the
space of predictive distributions.  By focusing on functions that are
invariant to unidentifiable parameter directions, we obtain a
representation of posterior structure that depends only on the
induced probability model rather than the specific parameterization.

Within this observable framework, tempering the posterior with an
inverse temperature parameter $\beta$ generates a hierarchy of
response functions analogous to those studied in statistical
physics.  Order parameters describe effective model structure,
susceptibilities measure fluctuations in that structure, and
predictive information criteria such as WAIC emerge naturally as
response quantities associated with predictive variance.

Experiments on mixture models, reduced-rank regression, and neural
networks demonstrate consistent qualitative behavior across
different classes of singular models.  In each case, transitions in
posterior geometry appear as peaks in susceptibility and changes in
predictive complexity.

These observations suggest that thermodynamic response theory may
provide a useful organizing framework for understanding singular
learning phenomena in modern statistical models. In particular, response functions offer a finite-sample diagnostic
counterpart to the asymptotic invariants of singular learning theory. More broadly, these results suggest that the thermodynamic response structure induced by posterior tempering may provide a practical route for probing the geometry of singular statistical models beyond asymptotic theory.

\section*{AI Acknowledgement}

The authors used a large language model (ChatGPT) as a writing and editing
assistant during the preparation of this manuscript. The system was used to
help organize outlines, improve exposition, and edit draft text. All
scientific ideas, mathematical arguments, experimental design, code, and
interpretations were developed and verified by the authors.

\bibliographystyle{plain}
\bibliography{refs}

\end{document}